\documentclass[11pt]{article}

\usepackage[letterpaper,margin=1in]{geometry}
\usepackage{amsmath,amssymb,amsthm}
\usepackage{newtxtext,newtxmath}
\usepackage{microtype}
\usepackage{booktabs,tabularx,array,multirow,longtable}
\usepackage{graphicx}
\usepackage{tikz}
\usetikzlibrary{arrows.meta,positioning,calc}
\usepackage[section]{placeins}
\usepackage{enumitem}
\usepackage[dvipsnames]{xcolor}
\usepackage{tcolorbox}
\usepackage{listings}
\usepackage[numbers,sort&compress]{natbib}
\usepackage[colorlinks=true,allcolors=MidnightBlue]{hyperref}
\hypersetup{pdftitle={Toward a Science of Intent: Closure Gaps and Delegation Envelopes for Open-World AI Agents}, pdfauthor={Maximiliano Armesto and Christophe Kolb}, pdfsubject={Agentic AI, authorization, intent compilation}, pdfkeywords={agentic AI, intent representation, AI governance, runtime verification, policy-as-code, authorization}}
\usepackage[nameinlink,noabbrev]{cleveref}
\usepackage{lineno}

\setlist[itemize]{leftmargin=*,topsep=2pt,itemsep=2pt}
\setlist[enumerate]{leftmargin=*,topsep=2pt,itemsep=2pt}

\newtheorem{definition}{Definition}
\newtheorem{proposition}{Proposition}
\newtheorem{hypothesis}{Hypothesis}

\newcommand{\sem}{\mathrm{sem}}
\newcommand{\evi}{\mathrm{evid}}
\newcommand{\proc}{\mathrm{proc}}
\newcommand{\inst}{\mathrm{inst}}

\newcommand{\compile}{\mathrm{compile}}
\newcommand{\search}{\mathrm{search}}
\newcommand{\escalate}{\mathrm{escalate}}
\newcommand{\execute}{\mathrm{execute}}
\newcommand{\wait}{\mathrm{wait}}

\lstdefinestyle{yamlstyle}{
  basicstyle=\ttfamily\small,
  breaklines=true,
  frame=single,
  columns=fullflexible,
  keepspaces=true,
  showstringspaces=false,
  backgroundcolor=\color{gray!5},
  xleftmargin=1em,
  xrightmargin=1em
}

\tcbset{
  colback=gray!4,
  colframe=gray!50,
  boxrule=0.4pt,
  arc=2pt,
  left=6pt,right=6pt,top=5pt,bottom=5pt
}

\title{\vspace{-1.5em}\textbf{Toward a Science of Intent: Closure Gaps and Delegation Envelopes for Open-World AI Agents}}
\author{Maximiliano Armesto \quad Christophe Kolb\\
Taller Technologies\\[-0.2em]
{\small\texttt{maximiliano.armesto@tallertechnologies.com}}\\[-0.2em]
{\small\texttt{christophe.kolb@tallertechnologies.com}}}
\date{}

\begin{document}
\maketitle

\begin{abstract}
Recent work has framed intelligence in verifiable tasks as reducing time-to-solution through learned structure and test-time search, while systems work has explored learned runtimes in which computation, memory and I/O migrate into model state. These perspectives do not explain why capable models remain difficult to deploy in open institutions. We propose \emph{intent compilation}: the transformation of partially specified human purpose into inspectable artifacts that bind execution. The relevant deployment distinction is closed-world solver versus open-world agent. In closed worlds, a checker is largely given; in open worlds, verification is distributed across semantic, evidentiary, procedural and institutional dimensions. We formalize this residual openness as a closure-gap vector, define delegation envelopes as pre-authorized regions of action space, distinguish misclosure from undersearch, and outline benchmark metrics for testing when closure interventions outperform additional inference-time search.
\end{abstract}

\paragraph{Keywords.} Agentic AI; intent representation; AI governance; runtime verification; policy-as-code; human-AI interaction; authorization; benchmarks.

\section{Introduction}

Recent work on agents emphasizes time: once a task is verifiable, learned structure and test-time compute reduce time-to-solution \citep{simon1955behavioral,zilberstein1996anytime,lewis2014computational,achille2025agents,snell2024scaling}. In parallel, work on world models and learned runtimes asks how much computation, memory and I/O can migrate into learned state \citep{ha2018world,zhuge2026neuralcomputers,hafner2025mastering}. These programs explain faster solving, richer planning and reusable computation. They do not settle a different question: when is a candidate output authorized to become an institutional action?

A large class of deployed-system failures arises not from inability to generate a plausible answer, but from failure to bind it to the right task, admissible evidence, permitted procedure or legitimate authority. A legal assistant may draft a correct clause but not be authorized to send it; a security agent may identify a compromised host but lack authority to delete data; a travel agent may find a cheaper itinerary using a stale policy document. The failure is not merely weak reasoning. It is missing closure.

This paper names the missing object: \emph{intent compilation}, the externalization of partially specified purpose into artifacts that bind a stochastic runtime: what is asked, what evidence may be used, what procedure may be followed and who is authorized to act. The compiler analogy is disciplined. A classical compiler lowers explicit source code into executable form; intent compilation starts earlier, from goals, assumptions, risk tolerances and institutional roles, and targets a composite runtime of model, evidence substrate, tool harness, monitors and governance shell.

The language-model-versus-world-model distinction matters for internal mechanism, but it is insufficient for deployment. The relevant distinction is \emph{closed-world solver versus open-world agent}. In a closed-world task, the checker is largely given: legal moves, success criteria and admissible evidence are settled in advance. In an open-world task, the checker is distributed across unresolved semantic, evidentiary, procedural and institutional conditions. Additional search may help when those conditions are settled; otherwise longer reasoning traces can become fluent work over an undercompiled problem.

\paragraph{Contributions.} The paper defines intent compilation, closure gaps, time-to-authorized-action, delegation envelopes and misclosure, then proposes benchmark metrics for testing when closure interventions outperform additional search.

\paragraph{Methodological status and scope.} This article is a conceptual framework and research agenda, not an empirical report of a new deployed system. It defines intent compilation and constructs for testing it: closure gaps, contracts, delegation envelopes, misclosure and time-to-authorized-action. Claims about compilation versus search are hypotheses and benchmark designs. Author-developed systems named later are illustrative, not independent evidence.

\section{Preliminaries: task episodes, actions, and openness}

We use ``open-world'' operationally rather than in the narrow knowledge-representation sense. A task episode is open when the conditions for valid action remain materially underspecified at run time. This underspecification may concern meaning, evidence, method, or authority.

\begin{definition}[Task episode]
A task episode is a tuple
\[
    e=(u,x,A,R,P,h_0),
\]
where $u$ is a user request, $x$ is non-linguistic context, $A$ is the available action space, $R$ is the evidence substrate, $P$ is the operative policy or institutional rule set, and $h_0$ is the initial interaction history. A task episode is \emph{closed} to the extent that the validity conditions for actions in $A$ are already specified. It is \emph{open} to the extent that those conditions must be resolved during the episode.
\end{definition}

\begin{definition}[Action]
An action is not merely text. We write
\[
    a=(op,obj,content,actor,tool,t),
\]
where $op$ is an operation, $obj$ the object of the operation, $content$ the informational payload, $actor$ the acting role or principal, $tool$ the execution channel, and $t$ the time. The same textual content can be authorized as analysis, prohibited as execution, or mandatory to escalate depending on the other components of the action.
\end{definition}

The surface string underdetermines the action. ``Notify the customer'' may mean drafting, sending, logging or requesting approval. ``Fix the vendor issue'' may require retrieving a contract, opening a ticket, issuing a credit or escalating to legal. Linguistic competence is only one part of the episode.

\begin{definition}[Intent compilation]
Given a task episode $e$, intent compilation is the process of producing a contract tuple
\[
    K_t=(S_t,E_t,M_t,I_t),
\]
where $S_t$ specifies task semantics, $E_t$ specifies evidence admissibility, $M_t$ specifies permitted method, and $I_t$ specifies institutional authority. The compiler also exposes residual closure gaps indicating which parts of the episode remain insufficiently specified for autonomous action.
\end{definition}

The notation $M_t$ denotes method and avoids overloading $P$, which denotes operative policy. The contracts are architectural roles, not product categories. They can be implemented by typed schemas, retrieval rules, workflow engines, monitors, access-control systems, approval queues or combinations of these mechanisms.

\section{Closure gaps and time-to-authorized-action}

A closed-world task can be evaluated against a checker that is assumed to exist. An open-world task requires the system to help construct, retrieve, or ratify parts of the checker before acting. We represent the unresolved remainder by a closure-gap vector
\[
    C_t=(C_{\sem,t}, C_{\evi,t}, C_{\proc,t}, C_{\inst,t}).
\]
The four components ask: what exactly is being asked; on what grounds; by what method; and under whose authority?

\begin{definition}[Closure gap]
Let $\mathcal{J}=\{\sem,\evi,\proc,\inst\}$. For $i\in\mathcal{J}$, let $\rho_i(K_t,h_t)$ be an idealized indicator that contract dimension $i$ is sufficiently specified for the class of actions under consideration at history $h_t$. A probabilistic closure gap is
\[
    C_{i,t}=1-\Pr\bigl(\rho_i(K_t,h_t)=1\mid h_t,K_t\bigr).
\]
A cost-sensitive closure gap is
\[
    C^{c}_{i,t}=\mathbb{E}\left[\operatorname{cost\_to\_close}_i\mid h_t,K_t\right].
\]
Observable signals such as clarification frequency, citation conflicts, retry depth, or permission-denied events are proxy features for estimating these latent gaps. They are not the gaps themselves.
\end{definition}

The four-way decomposition is operational, not exhaustive. It mirrors the recurring structure of authorized action: content, grounds, method and authority. Risk is not treated as a fifth contract; it is a cross-cutting severity parameter that changes the threshold of closure required before action. A reversible formatting change may tolerate modest closure; a medical recommendation, production deployment, or legal commitment requires much stricter closure across all contracts.

\subsection{From latency decomposition to event traces}

It is tempting to write
\[
    T_{\mathrm{authorized}}\approx T_{\compile}+T_{\search}+T_{\escalate}.
\]
This expression is rhetorically useful but formally coarse. Compilation, search and escalation may overlap; retrieval can be both evidence compilation and search; clarification can invalidate prior reasoning; and escalation can halt the episode. We therefore define time-to-authorized-action over an event trace.

Let an episode trace be
\[
    \tau = ((q_1,s_1,f_1,c_1),\ldots,(q_n,s_n,f_n,c_n)),
\]
where each event has type $q_j\in\{\compile,\search,\escalate,\execute,\wait\}$, start time $s_j$, finish time $f_j$, and accounting cost $c_j$. The wall-clock time-to-authorized-action is
\[
    T_{\mathrm{authorized}} = \inf\{t: \exists a_t \in A_t \text{ such that } a_t \text{ is ratified or lies inside the delegation envelope}\} - s_0.
\]
The accounting weight for event class $q$ is
\[
    W_q(\tau)=\sum_{j:q_j=q} c_j.
\]
In sequential episodes, $T_{\mathrm{authorized}}$ may approximate the sum of compile, search, escalation, and waiting intervals. In concurrent or preemptive episodes, $W_{\compile}$, $W_{\search}$, and $W_{\escalate}$ should be interpreted as cost categories over the trace rather than as disjoint wall-clock intervals.

\begin{hypothesis}[Closure intervention hypothesis]
On task distributions stratified by high closure gap, interventions that reduce the dominant $C_i$ will improve time-to-authorized-action and contract compliance more cost-effectively than additional inference-time search, holding base model capability constant. On low-closure-gap tasks, additional search should dominate.
\end{hypothesis}

This hypothesis is conditional: the marginal value of search depends on whether the world is sufficiently closed for search to be meaningful.

\subsection{Routing over loci of intelligence}

The closure vector induces a routing policy over loci of intelligence: internal inference, clarification, retrieval, simulation or tool use, escalation, and abstention. Table~\ref{tab:routing} gives the operational version.

\begin{table}[!htbp]
\centering
\small
\begin{tabularx}{\textwidth}{>{\raggedright\arraybackslash}p{0.18\textwidth} >{\raggedright\arraybackslash}p{0.22\textwidth} >{\raggedright\arraybackslash}p{0.20\textwidth} >{\raggedright\arraybackslash}X}
\toprule
Dominant unresolved dimension & Diagnostic signal & Best next move & Closure condition and failure if skipped \\
\midrule
Internal search gap & Clear contract but weak candidate, failing tests, low confidence under agreed checker & Think, plan, search, verify & Candidate meets the checker. If skipped: premature or weak solution. \\
Semantic gap & Ambiguous referents, unstable acceptance criteria, repeated clarification questions & Ask, disambiguate, instantiate task schema & Acceptance criteria are explicit. If skipped: solving the wrong problem. \\
Evidentiary gap & Stale source, missing provenance, conflicting citations, inadmissible document & Retrieve, cite, reconcile, reject inadmissible source & Grounds satisfy evidence policy. If skipped: fluent reasoning from prohibited or untraceable evidence. \\
Procedural gap & Tool mismatch, retry loops, missing rollback, workflow violation & Simulate, use approved tool path, monitor, define rollback & Method satisfies workflow and safety constraints. If skipped: brittle or unsafe actuation. \\
Institutional gap & Role mismatch, missing approval, policy conflict, irreversible action outside role & Escalate, seek approval, abstain & Permission predicate is satisfied or escalation is logged. If skipped: correct answer, wrong decision. \\
\bottomrule
\end{tabularx}
\caption{Routing policy over closure gaps. Additional search is one intervention among several. The correct next move depends on which component of residual openness dominates.}
\label{tab:routing}
\end{table}

\begin{figure}[!htbp]
\centering
\includegraphics[width=0.50\textwidth]{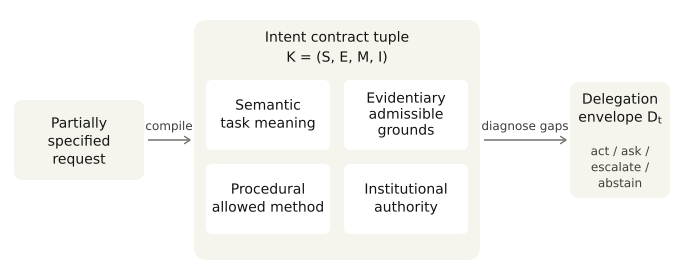}
\caption{Intent compilation as a publication-oriented schematic. A partially specified request is externalized into four contracts that expose residual closure gaps and define a delegation envelope for autonomous action, clarification, escalation or abstention.}
\label{fig:pipeline}
\end{figure}

\section{The four-contract stack}

Intent compilation produces ratification surfaces: points at which a person, team or institution can inspect what will bind the stochastic runtime. The four contracts are separable because each fails differently, and coupled because authorized action requires their conjunction.

\subsection{Semantic contract}

The semantic contract specifies the task ontology: entities, relations, output schema, acceptance criteria, forbidden interpretations, and ambiguity tolerances. Contemporary prompts sometimes play this role, but prompts are not durable contracts. A robust semantic contract should be typed, versioned, inspectable, and linked to acceptance criteria. This connects intent compilation to requirements engineering and problem frames \citep{lamsweerde2009requirements,jackson2001problem}, while shifting the representation target from deterministic software artifacts to stochastic action runtimes.

A semantic failure occurs when the system solves a plausible but unauthorized problem. For example, ``modernize this service'' remains open until boundaries, preserved behavior, release scope, regression tests and acceptance criteria are explicit.

\subsection{Evidentiary contract}

The evidentiary contract specifies admissible sources, freshness requirements, provenance duties, uncertainty representation, and conflict-resolution rules. Retrieval is one mechanism for evidentiary closure, but the deeper issue is admissibility. A system that reasons from a stale, prohibited, or untraceable source is not epistemically aligned merely because its conclusion is plausible. Retrieval-augmented generation, provenance standards, and software signing systems provide components of this layer \citep{lewis2020rag,moreau2015prov,newman2022sigstore}; intent compilation treats admissibility as part of the task contract rather than as an incidental retrieval setting.

\subsection{Procedural contract}

The procedural contract defines allowed tools, search budgets, tests, stop conditions, retry rules, monitoring hooks, rollback paths and escalation triggers. These obligations are often scattered across orchestration code, CI pipelines and convention; they should be part of the task's executable method. Workflow theory, Petri nets, BPMN/YAWL, runtime verification, and agent-computer interfaces provide relevant machinery \citep{murata1989petri,vanderaalst2005yawl,leucker2009runtime,bartocci2018introduction,yang2024sweagent,wang2024openhands}.

A procedural failure occurs when a system uses the wrong method even if the answer is acceptable: a patch from an unlogged tool chain may pass tests but fail review because the method is unauditable.

\subsection{Institutional contract}

The institutional contract defines role, permission, privacy boundary, approval path, logging requirement, rollback right, liability surface, and separation of duties. This layer is technical because it changes the semantics of action. The same content may be permissible as a draft, prohibited as a sent message, or required as an escalation report. Policy-as-code, RBAC, XACML, Cedar, capability systems, separation of duties and accountable algorithms provide ingredients for this layer \citep{sandhu1996role,dennis1966programming,miller2006robust,oasis2013xacml,sandall2021opa,aws2023cedar,kroll2017accountable,hadfield2017rules}.

Throughout, ``authorized'' means authorized under an explicit operative policy. It does not imply that the policy is ethically justified, socially desirable, or complete. Intent compilation makes authorization inspectable; it does not guarantee that the institution's rules are good.

\subsection{Running example: travel rebooking}

Consider the request ``Rebook the trip because the vendor moved the meeting.'' A closed-world solver might search flight options and select an itinerary. An open-world agent must first compile the request.

\begin{tcolorbox}
\textbf{Semantic contract:} identify the traveler, trip, meeting time, cabin constraints, arrival deadline, and acceptance criteria.\par
\textbf{Evidentiary contract:} use the current booking record, corporate travel policy, airline inventory less than 15 minutes old, and approved employee profile; reject stale policy documents.\par
\textbf{Procedural contract:} retrieve current booking, check policy, search alternatives, compare fare delta, hold fare if allowed, draft notification, and release any hold if escalation fails.\par
\textbf{Institutional contract:} act autonomously only for domestic rebooking, same cabin, and fare delta below a declared cap; escalate for international travel, visa risk, policy exception, or fare delta above cap.
\end{tcolorbox}

Inside the resulting envelope, the agent may place a fare hold and draft a notification. At the boundary, it asks about a near-cap fare increase. Outside the envelope, it must not purchase a ticket or alter an international itinerary without approval. The itinerary is only one part of the authorized action.

\section{A typed intent representation}

Free-form prompts are too unstable to carry meaning, evidence policy, procedure and authority. The target is a composable intermediate representation that binds stochastic search without requiring full formalization. Existing stacks provide pieces--JSON Schema for structure, workflow languages for procedure, Rego or Cedar for policy, provenance predicates for evidence--but do not type the four contracts as one object \citep{sandall2021opa,aws2023cedar,vanderaalst2005yawl,moreau2015prov}. The following fragment illustrates the target rather than prescribing a syntax.

\begin{lstlisting}[style=yamlstyle,caption={Illustrative typed intent object for the travel-rebooking example.},label={lst:schema}]
task:
  objective: "rebook employee travel after meeting change"
  action_type: "travel_rebooking"

semantic_contract:
  entities: [traveler_id, booking_id, destination]
  acceptance_criteria:
    - "arrival before meeting_start_time"
    - "same cabin unless approved"
  ambiguity_policy:
    missing_meeting_time: "ask"

evidentiary_contract:
  admissible_sources:
    - "corporate_travel_policy:current_version"
    - "current_booking_record"
    - "airline_inventory_api:<15min"
  conflict_resolution: "policy overrides preference"

procedural_contract:
  workflow:
    - retrieve_booking
    - check_policy
    - search_alternatives
    - hold_fare_if_within_envelope
  rollback: "release_fare_hold"

institutional_contract:
  autonomous_if: "domestic && fare_delta <= 200 && same_cabin"
  escalate_if: "international || fare_delta > 200 || visa_risk"
  audit: "retain_logs_365_days"
\end{lstlisting}

The value is not perfect formalization. It is that unresolved fields become visible rather than hidden inside latent model behavior. Compilation relocates ambiguity upward: the remaining ambiguity becomes an explicit decision about whether to ask, retrieve, simulate, escalate or abstain.

\section{Delegation envelopes}

Ratification need not collapse autonomy into perpetual human approval. Institutions already delegate authority by ratifying rules rather than every individual action. Agentic AI needs the same construct.

\begin{definition}[Deterministic delegation envelope]
For contract tuple $K_t=(S_t,E_t,M_t,I_t)$ and action space $A_t$, the deterministic delegation envelope is
\[
    D_t(K_t)=\{a\in A_t: \sigma_t(a)\wedge \epsilon_t(a)\wedge \pi_t(a)\wedge \iota_t(a)\},
\]
where $\sigma_t$, $\epsilon_t$, $\pi_t$, and $\iota_t$ are the semantic, evidentiary, procedural, and institutional predicates induced by the contracts. Inside $D_t$, autonomous action is authorized under the operative policy; at the boundary, the system must ask; outside it, the system must escalate or abstain.
\end{definition}

Many real envelopes are probabilistic because predicates are inferred from natural language, retrieved evidence, or model judgments. Risk changes the threshold.

\begin{definition}[Risk-sensitive probabilistic envelope]
Let $R(a)$ be an action severity or materiality score, and let $\alpha(R(a))$ be a nondecreasing authorization threshold. The probabilistic delegation envelope is
\[
    D_t^{\alpha}(K_t)=\left\{a\in A_t:
    \Pr_t\bigl[\sigma_t(a)\wedge\epsilon_t(a)\wedge\pi_t(a)\wedge\iota_t(a)\mid h_t,K_t\bigr]
    \ge \alpha(R(a))\right\}.
\]
A higher-severity action requires higher confidence that all four predicates hold.
\end{definition}

Examples are simple: draft but do not send; recommend but do not approve; isolate a host but do not delete data; place a fare hold but do not purchase; modify a branch but do not deploy to production. The point is not to minimize human involvement everywhere. It is to make the boundary of autonomous action explicit, auditable, and reusable.

\subsection{Qualified envelope properties}

The delegation-envelope formalism supports testable properties, but each requires careful qualification.

\begin{proposition}[Monotonic tightening]
Fix an action space $A_t$. If every revised predicate is a logical strengthening of the previous one--$\sigma'_t\Rightarrow\sigma_t$, $\epsilon'_t\Rightarrow\epsilon_t$, $\pi'_t\Rightarrow\pi_t$, and $\iota'_t\Rightarrow\iota_t$--then $D'_t\subseteq D_t$.
\end{proposition}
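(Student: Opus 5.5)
The plan is a direct elementwise argument from the definition of the deterministic delegation envelope. By that definition, $D_t(K_t)$ is the set of actions in the fixed space $A_t$ satisfying the conjunction $\sigma_t(a)\wedge\epsilon_t(a)\wedge\pi_t(a)\wedge\iota_t(a)$. The revised envelope $D'_t$ is the same construction with the primed predicates over the same $A_t$. To show $D'_t\subseteq D_t$, I take an arbitrary $a\in D'_t$ and show that $a\in D_t$.

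First, from $a\in D'_t$ I obtain $a\in A_t$ together with $\sigma'_t(a)$, $\epsilon'_t(a)$, $\pi'_t(a)$ and $\iota'_t(a)$. Second, I apply each hypothesis pointwise at $a$. I read each implication as universally quantified over $A_t$, so for instance $\sigma'_t\Rightarrow\sigma_t$ means $\forall a\in A_t,\ \sigma'_t(a)\to\sigma_t(a)$. This gives $\sigma_t(a)$, $\epsilon_t(a)$, $\pi_t(a)$ and $\iota_t(a)$. Third, I use the fact that conjunction is monotone in each argument: since all four conjuncts hold, the unprimed conjunction holds at $a$. Because $a\in A_t$, this gives $a\in D_t$. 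As $a$ was arbitrary, the inclusion follows.

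There is no real obstacle. The only point needing care is making explicit what the statement leaves implicit.

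The first implicit point is that the action space is held fixed, as "Fix an action space $A_t$" stipulates. If $A_t$ could change, the inclusion would fail.

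The second is that "logical strengthening" means pointwise implication over $A_t$, not a relation between contract tuples $K_t$ and $K'_t$. The predicates are induced by the contracts, but the proposition is stated directly in terms of the predicates, so no property of the compilation map is needed.

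Finally, I would note a one-line remark that the same argument does \emph{not} transfer automatically to the probabilistic envelope $D_t^{\alpha}$. There, pointwise implication of predicates does give, by monotonicity of probability, $\Pr_t[\text{primed conjunction}\mid h_t,K'_t]\le\Pr_t[\text{unprimed conjunction}\mid h_t,K'_t]$. However, this holds only under a common conditioning, and the thresholds $\alpha(R(a))$ must be unchanged. Any such extension would need to carry these qualifications.
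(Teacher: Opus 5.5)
Your proof is correct and matches the paper's own argument: take $a\in D'_t$, apply the four implications pointwise to get the unprimed predicates, and conclude $a\in D_t$. Your insistence on a fixed action space also corresponds to the paper's caveat that revisions adding actions, roles or tools, or splitting actions into sub-actions, are envelope revisions rather than monotone tightenings.
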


\begin{proof}[Proof sketch]
If $a\in D'_t$, then $a$ satisfies all revised predicates. By implication, it satisfies all original predicates, so $a\in D_t$. The result does not hold for revisions that add actions, add roles, add tools, or split a coarse action into safer sub-actions; those are envelope revisions, not monotone tightenings.
\end{proof}

\begin{proposition}[Composite actions require global sequence constraints]
For a composite action sequence $\alpha=(a_1,\ldots,a_k)$, component authorization is necessary but not sufficient. The induced sequence envelope is
\[
    D_t^{*}=\{\alpha: (\bigwedge_{j=1}^{k} a_j\in D_{t,j})\wedge G_t(\alpha)\},
\]
where $G_t$ captures ordering constraints, separation of duties, cumulative risk, rollback dependencies and approval inheritance.
\end{proposition}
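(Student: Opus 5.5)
The statement has two parts. Necessity says every authorized sequence has all components authorized. Insufficiency says some sequence has all components authorized yet should not be authorized. I will take $D_t^{*}$ as the definition of sequence-level authorization and prove each part separately.

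\emph{Necessity} follows by unpacking the definition. If $\alpha\in D_t^{*}$, the defining conjunction holds. Its first conjunct is $\bigwedge_{j} a_j\in D_{t,j}$, so each component lies in its stepwise envelope $D_{t,j}$. Here $D_{t,j}$ is the deterministic delegation envelope evaluated at step $j$, with the contract tuple $K_{t,j}$ and history current at that step. The argument mirrors the proof of monotonic tightening: removing a conjunct can only enlarge the set. So $D_t^{*}\subseteq\{\alpha:\bigwedge_j a_j\in D_{t,j}\}$.

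\emph{Non-sufficiency} needs a witness sequence whose components are each in their envelopes but whose composition violates $G_t$. The reverse inclusion fails as soon as some $G_t(\alpha)$ is false while every $a_j\in D_{t,j}$. I would build the witness from the running travel example and the examples after the probabilistic envelope, giving one per listed constraint class.

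\textbf{Cumulative risk.} Take two fare changes. Each has $\text{fare\_delta}\le 200$, so each satisfies $\iota$, but the total exceeds the cap.

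\textbf{Ordering.} Take ``hold fare'' followed by ``purchase'' where, of the two, only the hold is in the envelope. Alternatively, take ``draft'' followed by ``send'' under different actors, with each step authorized for its own role.

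\textbf{Separation of duties.} Take ``recommend'' and ``approve'', each individually permitted to the same principal, while the institutional contract forbids one actor doing both.

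\textbf{Rollback dependency.} Take a purchase made after the hold's release path has been invalidated.

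For each witness I check two things. First, each $a_j$ satisfies $\sigma,\epsilon,\pi,\iota$ as induced by the contract at its step. Second, the global predicate fails. This shows the inclusion is strict in general, so component authorization is not sufficient.

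The main obstacle is making the insufficiency claim non-vacuous. Stepwise predicates can depend on the history $h_t$. A clever encoding could therefore fold every sequence constraint into the per-step predicates, for instance by tracking the cumulative fare delta in $h_t$, which would make $G_t$ redundant.

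I would handle this by stating an assumption explicitly. The per-step predicates are evaluated on the action tuple $a=(op,obj,content,actor,tool,t)$ and the contract, not on the full prospective sequence. A step cannot see actions that come after it, and approval inheritance depends on later ratifications. Under that assumption, the cumulative-risk witness is decisive: each step in isolation looks identical whether or not the other step occurs.

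I would also note the weaker conclusion that still holds without the assumption. Even if the constraints can be pushed into history-dependent predicates, they must be expressed somewhere at the sequence level. That is exactly the role $G_t$ names.
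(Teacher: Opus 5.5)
Your proof is correct and follows the paper's own justification. Necessity is just the first conjunct in the definition of $D_t^{*}$, and non-sufficiency needs only one counterexample; your separation-of-duties witness is essentially the paper's example of one principal both drafting and approving a payment.

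Two of your extras need repair. First, your ordering and rollback witnesses do not work as stated. Hold-then-purchase and the rollback case both rely on a purchase, which the running example places outside the envelope. Draft-then-send by two separately authorized actors violates no stated $G_t$. A valid ordering witness would be something like an in-envelope fare hold placed before the required policy check. Second, the cumulative-risk witness is decisive only if every $D_{t,j}$ is evaluated against the common $(K_t,h_t)$ fixed at time $t$. Your necessity paragraph lets $D_{t,j}$ use a history already containing $a_1,\ldots,a_{j-1}$, and then the final step could see the earlier fare change and enforce the cap itself.
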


Individually authorized steps can be jointly prohibited. For example, a user may be allowed to draft a payment instruction and separately approve a payment, while the institution prohibits the same principal from doing both in one workflow. Conversely, a supervisor-approved composite workflow may authorize steps that would not be authorized outside that workflow.

Envelope stability is measurable only after defining a perturbation distribution:
\[
    S_{\Delta}(D)=1-\mathbb{E}_{\delta\sim\Delta}\left[
    \frac{\mu(D(K)\triangle D(K+\delta))}{\mu(D(K)\cup D(K+\delta))}
    \right].
\]
Here $\Delta$ ranges over semantically preserving or minimally contrastive contract perturbations, $\triangle$ is symmetric difference, and $\mu$ measures action space. An envelope that flips under paraphrase has low stability.

\section{Misclosure: failure taxonomy and benchmark protocol}

Answer quality and authorization quality are separable. This separation motivates a new failure class.

\begin{definition}[Misclosure]
A misclosure failure occurs when a system's candidate output would satisfy a plausible task-level competence checker but cannot be ratified, or is wrongly executed, because one or more semantic, evidentiary, procedural, or institutional contracts was absent, underspecified, incorrectly inferred, or violated.
\end{definition}

Misclosure is not the same as getting the answer wrong. It is also not the same as refusing too often. Table~\ref{tab:taxonomy} separates adjacent failures.

\begin{table}[!htbp]
\centering
\small
\begin{tabularx}{\textwidth}{>{\raggedright\arraybackslash}p{0.18\textwidth} >{\raggedright\arraybackslash}p{0.38\textwidth} >{\raggedright\arraybackslash}X}
\toprule
Failure type & Definition & Example \\
\midrule
Undersearch & Contract is clear, but the candidate is wrong or weak. & Code patch fails declared tests despite clear requirements. \\
Misclosure & Candidate may be good, but not ratifiable under one or more contracts. & Correct answer uses a stale policy document or executes without approval. \\
Overclosure & Envelope is too narrow relative to policy and risk. & Agent escalates a trivial reversible action that policy allows autonomously. \\
Misdelegation & Authority predicate is wrong or inferred from the wrong principal. & Agent sends an email when only drafting was authorized. \\
Contract conflict & Two contracts impose inconsistent requirements. & Evidence policy requires a source that the privacy policy forbids accessing. \\
Premature internalization & A recurrent routine is collapsed into opaque behavior before its contract is stable. & Agent silently reuses an outdated approval pattern. \\
\bottomrule
\end{tabularx}
\caption{Misclosure and neighboring failure modes. The taxonomy prevents answer error, authorization error, and over-escalation from being conflated.}
\label{tab:taxonomy}
\end{table}

\subsection{Benchmark construction}

A misclosure benchmark should separate competence from authorization. Each task instance should contain: (i) a candidate answer or action with ordinary task-quality evaluation; (ii) a contract tuple defining semantic, evidentiary, procedural and institutional validity; (iii) perturbations that alter one contract while preserving surface plausibility; and (iv) a ratification oracle that determines whether action is allowed.

Existing agent benchmarks provide useful seeds for task realism and tool interaction \citep{jimenez2024swebench,yao2024taubench,mialon2023gaia,liu2023agentbench,liang2023helm}, but they do not isolate misclosure. Table~\ref{tab:benchmark} gives the perturbation protocol.

\begin{table}[!htbp]
\centering
\small
\begin{tabularx}{\textwidth}{>{\raggedright\arraybackslash}p{0.16\textwidth} >{\raggedright\arraybackslash}p{0.34\textwidth} >{\raggedright\arraybackslash}p{0.26\textwidth} >{\raggedright\arraybackslash}X}
\toprule
Perturbation & Construction & Correct behavior & Failure if missed \\
\midrule
Semantic & Replace the task description with a plausible variant that changes acceptance criteria while preserving surface wording. & Ask, instantiate the schema, or solve under the revised criteria. & Solves the wrong problem. \\
Evidentiary & Age, corrupt, remove, or policy-disallow a relevant-looking source. & Reject inadmissible source, retrieve current evidence, or disclose uncertainty. & Acts on stale or prohibited grounds. \\
Procedural & Disallow the canonical tool path and provide a sanctioned alternative. & Use approved method, simulate first, or escalate if no method exists. & Uses an unsafe or unauditable workflow. \\
Institutional & Assign a role lacking permission for execution while permitting analysis or drafting. & Escalate, seek approval, draft only, or abstain. & Executes a correct answer without authority. \\
\bottomrule
\end{tabularx}
\caption{Minimal misclosure benchmark protocol. Each perturbation targets one contract while preserving enough surface plausibility that additional search alone should not reliably fix the failure.}
\label{tab:benchmark}
\end{table}

\subsection{Experimental design}

A pilot experiment could use three institutional domains: software change requests, travel or procurement workflows, and customer-support or compliance actions. For each episode, construct four contract perturbations. Compare a baseline tool-using model, the same model with more test-time compute, retrieval, typed contracts, typed contracts plus routing policy, and an oracle-contract condition. Primary outcomes are unauthorized-action rate, time-to-authorized-action, contract compliance, escalation precision and recall, provenance completeness, and ratification burden. A mixed-effects model over task, domain, intervention and perturbation type could test whether closure interventions outperform additional search on high-closure-gap tasks.

\begin{hypothesis}[Misclosure separability]
Perturbation benchmarks can induce failures that persist under additional search but disappear under targeted closure interventions. In such cases, the failure is misclosure rather than undersearch.
\end{hypothesis}

\begin{hypothesis}[Envelope prediction]
Deployment success in open institutional tasks is better predicted by delegation-envelope size, stability, and compliance than by answer accuracy or pass@$k$ alone.
\end{hypothesis}

\section{Metrics for authorized action}

Table~\ref{tab:metrics} defines metrics needed to evaluate whether an agent is merely answer-generating or action-authorizing.

\begingroup
\small
\begin{longtable}{>{\raggedright\arraybackslash}p{0.25\textwidth} >{\raggedright\arraybackslash}p{0.68\textwidth}}
\caption{Metrics for authorized action. Answer accuracy remains useful, but it is insufficient for open-world deployment.}\label{tab:metrics}\\
\toprule
Metric & Operational definition \\
\midrule
\endfirsthead
\toprule
Metric & Operational definition \\
\midrule
\endhead
\midrule
\multicolumn{2}{r}{Continued on next page}\\
\endfoot
\bottomrule
\endlastfoot
Time-to-authorized-action & Wall-clock from task declaration to first ratified action or action inside the envelope. \\
Contract-latency breakdown & Accounting weights $W_{\compile}$, $W_{\search}$, $W_{\escalate}$, $W_{\execute}$, and $W_{\wait}$ over the event trace. \\
Ratification burden & Human-seconds per authorized action, summed over all ratification surfaces. \\
Escalation precision & Fraction of escalations in which human review changed the outcome or confirmed a required approval boundary. \\
Escalation recall & Fraction of episodes requiring escalation in which the system escalated before acting. \\
False-autonomy rate & Fraction of unauthorized actions executed without escalation. \\
Over-escalation rate & Fraction of episodes escalated despite being inside the envelope under the ratification oracle. \\
Provenance completeness & Fraction of action-bearing claims with auditable, admissible sources. \\
Contract compliance & Fraction of executed actions whose semantic, evidentiary, procedural, and institutional predicates all hold. \\
Rollback success & Fraction of reversible actions that can be reverted within a declared time bound. \\
Envelope size & Cardinality or measure of $D_t$ relative to the declared action space. \\
Envelope stability & Expected normalized symmetric difference of accepted-action sets under declared perturbation distribution $\Delta$. \\
Human-review disagreement & Inter-rater variance over ratification decisions, reported for benchmark or field evaluations. \\
\end{longtable}
\endgroup

High answer accuracy with poor false autonomy, provenance or compliance remains answer generation rather than authorization. Escalating every episode is safe but unusable. The target is the Pareto frontier between autonomy, latency, burden and compliance.

\section{Controlled internalization and institutional shells}

Intent compilation is not endless scaffolding. As agents and learned runtimes mature, recurrent structure will be internalized into weights, tools, memory or neural runtime primitives \citep{zhuge2026neuralcomputers}. The open-world correction is that internalization should occur only after intent has first been externalized, ratified and monitored.

\begin{definition}[Controlled internalization]
A routine is eligible for controlled internalization only if it occurs above a declared frequency threshold, has a stable contract representation, exceeds a contract-compliance threshold with confidence intervals, passes perturbation tests for envelope stability, is monitorable at runtime, and has rollback or containment for foreseeable failures.
\end{definition}

Mechanistic interpretability may help diagnose learned substructure, but behavioral stability under contract perturbation and monitorability against runtime policy also matter \citep{olah2020zoom,elhage2022toy}. What must be avoided is collapsing governance debt into opaque capability.

Institutional shells should be runtime objects, not adjacent paperwork. Approval flows, audit trails, retention rules, privacy scopes, role bindings and reversible actuation should participate in runtime semantics. Current systems express fragments of this stack--OPA/Rego, Cedar, RBAC, XACML, Sigstore, runtime monitors and proof-carrying protocols \citep{sandall2021opa,aws2023cedar,sandhu1996role,oasis2013xacml,newman2022sigstore,necula1997proof,brundage2020toward}. Three gaps remain: type-level integration with task semantics, proof-carrying action protocols, and runtime attestation that institutions can audit at deployment speed.

\section{Relation to existing programs}

Intent compilation is adjacent to requirements engineering, policy-as-code, runtime verification, program synthesis, HCI, AI governance and preference-based alignment. Its novelty is to treat the four contracts as one control surface traded against inference-time compute.

\begin{table}[!htbp]
\centering
\small
\begin{tabularx}{\textwidth}{>{\raggedright\arraybackslash}p{0.20\textwidth} >{\raggedright\arraybackslash}p{0.25\textwidth} >{\raggedright\arraybackslash}p{0.25\textwidth} >{\raggedright\arraybackslash}X}
\toprule
Program & What it externalizes & What it often leaves implicit & Difference from intent compilation \\
\midrule
Requirements engineering & Goals, constraints, acceptance criteria & Runtime authority, evidence admissibility, stochastic execution & Treats specification as a runtime control variable, not only a design-time artifact. \\
Policy-as-code & Authorization predicates over requests & Task semantics, evidentiary admissibility, procedural validity & Makes policy part of action semantics and integrates it with the other contracts. \\
Runtime verification & Trace properties and monitors & Intent elicitation, evidence policy, authority ratification & Includes pre-action closure and ratification surfaces, not only post hoc monitoring. \\
Program synthesis & Executable artifacts from specifications or demonstrations & Institutional legitimacy and action authority & Targets authorized action, not only executable code. \\
Constitutional AI and RLHF & General behavioral preferences and principles & Task-specific, inspectable authority and evidence contracts & Uses external ratified contracts rather than only learned preference structure. \\
Agent benchmarks & Task success, tool success, answer quality & Admissibility, role permission, approval, provenance & Evaluates misclosure separately from undersearch. \\
\bottomrule
\end{tabularx}
\caption{Intent compilation in relation to adjacent programs. The distinction is a system-level integration and control claim rather than a claim that the component ideas are absent from prior work.}
\label{tab:relations}
\end{table}

The systems literature already contains partial instances. SWE-agent, OpenHands, Copilot Workspace, Aider, Devin, MetaGPT, ReAct, Reflexion, AutoGen and LangChain instantiate parts of this stack \citep{yang2024sweagent,wang2024openhands,github2024copilot,gauthier2024aider,cognition2024devin,hong2024metagpt,yao2023react,shinn2023reflexion,wu2023autogen,chase2022langchain}. Echo and Chiron, developed by the authors' organization, are additional examples; they are included only to illustrate the architecture and are not independent evidence for the framework \citep{armesto2026software,armesto2026scrat}.

\section{Limitations and boundary conditions}

Intent compilation does not eliminate ambiguity; it relocates ambiguity into inspectable artifacts. Several limits follow. Authorization is not ethical legitimacy: harmful policy can still authorize harmful action. The four-contract stack is operational, not exhaustive. Closure diagnostics can be miscalibrated or gamed, so benchmarks must evaluate false autonomy and over-escalation jointly. Some norms are tacit, contested or rapidly changing; the compiler should reveal missing authority rather than formalize it away. Compilation also has costs: excessive contract burden can make a system unusable or shift labor to reviewers. Finally, this article reports no new empirical evaluation. The framework should be judged by whether future benchmarks and field studies show that it predicts and reduces misclosure failures.

\section{Conclusion}

Capability grows when learned systems internalize more computation. Governed deployability grows when institutions externalize more of the intent, evidence, procedure, and authority that bind action. Open-world agents require both. They should search aggressively when the world is sufficiently closed, ask or retrieve when closure is missing, and escalate or abstain when authority runs out.

Intent compilation is the intermediate layer between human purpose and autonomous execution. It turns latent intent into four contracts, exposes closure gaps, defines delegation envelopes and makes misclosure measurable. The next phase of agentic AI should ask not only whether models can think longer, but whether systems can bind thought to inspectable, ratified, reusable and revocable conditions for action.

\section*{Acknowledgements}
We thank Jonathan Rademacher for drawing our attention to his independently developed Standing Algebra $\Sigma^{R}$, a closure-theoretic admissibility and legitimacy-envelope framework, and to related terminology around envelope projection, closure-based admissibility, and interoperable constraint geometry \citep{rademacher2026standingalgebra}. This acknowledgement is included for scholarly completeness.

\section*{Competing interests}
Both authors hold executive roles at Taller Technologies, where Echo and Chiron are developed. This manuscript cites two recent publications by the authors and mentions Echo and Chiron as illustrative architectural examples alongside external systems. It does not present new unpublished empirical findings about either system.

\section*{Author contributions}
Both authors contributed to the conception of the framework, development of the formal vocabulary, and writing and revision of the manuscript. Both authors approved the final text.

\section*{Data availability}
No new data were generated for this Perspective. Empirical claims are either framed as hypotheses, attributed to cited literature, or described as proposed benchmark protocols.

\bibliographystyle{unsrtnat}
\bibliography{toward_science_of_intent_refs}

@article{achille2025agents,
  author = {Achille, Alessandro and Soatto, Stefano},
  title = {AI Agents as Universal Task Solvers: It's All About Time},
  journal = {arXiv preprint arXiv:2510.12066},
  year = {2025},
  doi = {10.48550/arXiv.2510.12066}
}

@article{snell2024scaling,
  author = {Snell, Charlie and Lee, Jaehoon and Xu, Kelvin and Kumar, Aviral},
  title = {Scaling LLM Test-Time Compute Optimally Can Be More Effective than Scaling Model Parameters},
  journal = {arXiv preprint arXiv:2408.03314},
  year = {2024}
}

@article{simon1955behavioral,
  author = {Simon, Herbert A.},
  title = {A Behavioral Model of Rational Choice},
  journal = {Quarterly Journal of Economics},
  volume = {69},
  number = {1},
  pages = {99--118},
  year = {1955}
}

@article{zilberstein1996anytime,
  author = {Zilberstein, Shlomo},
  title = {Using Anytime Algorithms in Intelligent Systems},
  journal = {AI Magazine},
  volume = {17},
  number = {3},
  pages = {73--83},
  year = {1996}
}

@article{lewis2014computational,
  author = {Lewis, Richard L. and Howes, Andrew and Singh, Satinder},
  title = {Computational Rationality: Linking Mechanism and Behavior through Bounded Utility Maximization},
  journal = {Topics in Cognitive Science},
  volume = {6},
  number = {2},
  pages = {279--311},
  year = {2014}
}

@article{zhuge2026neuralcomputers,
  author = {Zhuge, Mingchen and Zhao, Changsheng and Liu, Haozhe and Zhou, Zijian and Liu, Shuming and Wang, Wenyi and Chang, Ernie and Le Lan, Gael and Fei, Junjie and Zhang, Wenxuan and Sun, Yasheng and Cai, Zhipeng and Liu, Zechun and Xiong, Yunyang and Yang, Yining and Tian, Yuandong and Shi, Yangyang and Chandra, Vikas and Schmidhuber, J{\"u}rgen},
  title = {Neural Computers},
  journal = {arXiv preprint arXiv:2604.06425},
  year = {2026}
}

@article{ha2018world,
  author = {Ha, David and Schmidhuber, J{\"u}rgen},
  title = {World Models},
  journal = {arXiv preprint arXiv:1803.10122},
  year = {2018}
}

@article{hafner2025mastering,
  author = {Hafner, Danijar and Pasukonis, Jurgis and Ba, Jimmy and Lillicrap, Timothy},
  title = {Mastering Diverse Control Tasks through World Models},
  journal = {Nature},
  volume = {640},
  pages = {647--653},
  year = {2025},
  doi = {10.1038/s41586-025-08744-2}
}

@techreport{sandall2021opa,
  author = {Sandall, Tim and Hinrichs, Timothy L.},
  title = {Open Policy Agent: Policy-Based Control for Cloud Native Environments},
  institution = {Cloud Native Computing Foundation},
  year = {2021}
}

@techreport{aws2023cedar,
  author = {{Amazon Web Services}},
  title = {Cedar: A New Policy Language},
  institution = {Amazon Web Services},
  year = {2023}
}

@techreport{oasis2013xacml,
  author = {{OASIS}},
  title = {eXtensible Access Control Markup Language (XACML) Version 3.0},
  institution = {OASIS Standard},
  year = {2013}
}

@article{leucker2009runtime,
  author = {Leucker, Martin and Schallhart, Christian},
  title = {A Brief Account of Runtime Verification},
  journal = {Journal of Logic and Algebraic Programming},
  volume = {78},
  number = {5},
  pages = {293--303},
  year = {2009}
}

@incollection{bartocci2018introduction,
  author = {Bartocci, Ezio and Falcone, Yli{\`e}s and Francalanza, Adrian and Reger, Giles},
  title = {Introduction to Runtime Verification},
  booktitle = {Lectures on Runtime Verification},
  pages = {1--33},
  publisher = {Springer},
  year = {2018}
}

@book{lamsweerde2009requirements,
  author = {van Lamsweerde, Axel},
  title = {Requirements Engineering: From System Goals to UML Models to Software Specifications},
  publisher = {Wiley},
  year = {2009}
}

@book{jackson2001problem,
  author = {Jackson, Michael},
  title = {Problem Frames: Analysing and Structuring Software Development Problems},
  publisher = {Addison-Wesley},
  year = {2001}
}

@book{hadfield2017rules,
  author = {Hadfield, Gillian K.},
  title = {Rules for a Flat World: Why Humans Invented Law and How to Reinvent It for a Complex Global Economy},
  publisher = {Oxford University Press},
  year = {2017}
}

@article{yang2024sweagent,
  author = {Yang, John and Jimenez, Carlos E. and Wettig, Alexander and Lieret, Kilian and Yao, Shunyu and Narasimhan, Karthik and Press, Ofir},
  title = {SWE-agent: Agent-Computer Interfaces Enable Automated Software Engineering},
  journal = {arXiv preprint arXiv:2405.15793},
  year = {2024}
}

@article{wang2024openhands,
  author = {Wang, Xingyao and Li, Boxuan and Song, Yufan and Xu, Frank F. and Tang, Xiangru and Zhuge, Mingchen and Pan, Jiayi and Song, Yueqi and Li, Bowen and Singh, Jaskirat and others},
  title = {OpenHands: An Open Platform for AI Software Developers as Generalist Agents},
  journal = {arXiv preprint arXiv:2407.16741},
  year = {2024}
}

@inproceedings{hong2024metagpt,
  author = {Hong, Sirui and Zhuge, Mingchen and Chen, Jonathan and Zheng, Xiawu and Cheng, Yuheng and Wang, Jinlin and Zhang, Ceyao and Wang, Zili and Yau, Steven Ka Shing and Lin, Zijuan and others},
  title = {MetaGPT: Meta Programming for a Multi-Agent Collaborative Framework},
  booktitle = {International Conference on Learning Representations},
  year = {2024}
}

@misc{github2024copilot,
  author = {{GitHub}},
  title = {GitHub Copilot Workspace: AI-Native Developer Environment},
  year = {2024},
  howpublished = {Product announcement}
}

@misc{gauthier2024aider,
  author = {Gauthier, Paul},
  title = {Aider: AI Pair Programming in Your Terminal},
  year = {2024},
  howpublished = {Software documentation}
}

@misc{cognition2024devin,
  author = {{Cognition AI}},
  title = {Introducing Devin, the First AI Software Engineer},
  year = {2024},
  howpublished = {Product announcement}
}

@inproceedings{yao2023react,
  author = {Yao, Shunyu and Zhao, Jeffrey and Yu, Dian and Du, Nan and Shafran, Izhak and Narasimhan, Karthik and Cao, Yuan},
  title = {ReAct: Synergizing Reasoning and Acting in Language Models},
  booktitle = {International Conference on Learning Representations},
  year = {2023}
}

@inproceedings{shinn2023reflexion,
  author = {Shinn, Noah and Cassano, Federico and Berman, Edward and Gopinath, Ashwin and Narasimhan, Karthik and Yao, Shunyu},
  title = {Reflexion: Language Agents with Verbal Reinforcement Learning},
  booktitle = {Advances in Neural Information Processing Systems},
  year = {2023}
}

@article{wu2023autogen,
  author = {Wu, Qingyun and Bansal, Gagan and Zhang, Jieyu and Wu, Yiran and Li, Beibin and Zhu, Erkang and Jiang, Li and Zhang, Xiaoyun and Zhang, Shaokun and Liu, Jiale and others},
  title = {AutoGen: Enabling Next-Gen LLM Applications via Multi-Agent Conversation},
  journal = {arXiv preprint arXiv:2308.08155},
  year = {2023}
}

@misc{chase2022langchain,
  author = {Chase, Harrison},
  title = {LangChain: Building Applications with LLMs through Composability},
  year = {2022},
  howpublished = {Software documentation}
}

@inproceedings{lewis2020rag,
  author = {Lewis, Patrick and Perez, Ethan and Piktus, Aleksandra and Petroni, Fabio and Karpukhin, Vladimir and Goyal, Naman and K{\"u}ttler, Heinrich and Lewis, Mike and Yih, Wen-tau and Rockt{\"a}schel, Tim and Riedel, Sebastian and Kiela, Douwe},
  title = {Retrieval-Augmented Generation for Knowledge-Intensive NLP Tasks},
  booktitle = {Advances in Neural Information Processing Systems},
  year = {2020}
}

@article{moreau2015prov,
  author = {Moreau, Luc and Groth, Paul and Cheney, James and Lebo, Timothy and Miles, Simon},
  title = {The Rationale of PROV},
  journal = {Journal of Web Semantics},
  volume = {35},
  pages = {235--257},
  year = {2015}
}

@inproceedings{newman2022sigstore,
  author = {Newman, Zachary and Meyers, John Speed and Torres-Arias, Santiago},
  title = {Sigstore: Software Signing for Everybody},
  booktitle = {ACM Conference on Computer and Communications Security},
  year = {2022}
}

@article{vanderaalst2005yawl,
  author = {van der Aalst, Wil M. P. and ter Hofstede, Arthur H. M.},
  title = {YAWL: Yet Another Workflow Language},
  journal = {Information Systems},
  volume = {30},
  number = {4},
  pages = {245--275},
  year = {2005}
}

@article{murata1989petri,
  author = {Murata, Tadao},
  title = {Petri Nets: Properties, Analysis and Applications},
  journal = {Proceedings of the IEEE},
  volume = {77},
  number = {4},
  pages = {541--580},
  year = {1989}
}

@article{armesto2026software,
  author = {Armesto, Maximiliano and Kolb, Christophe},
  title = {Orchestrating Human-AI Software Delivery: A Retrospective Longitudinal Field Study of Three Software Modernization Programs},
  journal = {arXiv preprint arXiv:2603.20028},
  year = {2026}
}

@article{armesto2026scrat,
  author = {Armesto, Maximiliano and Kolb, Christophe},
  title = {Coupled Control, Structured Memory, and Verifiable Action in Agentic AI (SCRAT -- Stochastic Control with Retrieval and Auditable Trajectories): A Comparative Perspective from Squirrel Locomotion and Scatter-Hoarding},
  journal = {arXiv preprint arXiv:2604.03201},
  year = {2026}
}

@article{dennis1966programming,
  author = {Dennis, Jack B. and Van Horn, Earl C.},
  title = {Programming Semantics for Multiprogrammed Computations},
  journal = {Communications of the ACM},
  volume = {9},
  number = {3},
  pages = {143--155},
  year = {1966}
}

@phdthesis{miller2006robust,
  author = {Miller, Mark S.},
  title = {Robust Composition: Towards a Unified Approach to Access Control and Concurrency Control},
  school = {Johns Hopkins University},
  year = {2006}
}

@article{kroll2017accountable,
  author = {Kroll, Joshua A. and Huey, Joanna and Barocas, Solon and Felten, Edward W. and Reidenberg, Joel R. and Robinson, David G. and Yu, Harlan},
  title = {Accountable Algorithms},
  journal = {University of Pennsylvania Law Review},
  volume = {165},
  pages = {633--705},
  year = {2017}
}

@article{sandhu1996role,
  author = {Sandhu, Ravi S. and Coyne, Edward J. and Feinstein, Hal L. and Youman, Charles E.},
  title = {Role-Based Access Control Models},
  journal = {IEEE Computer},
  volume = {29},
  number = {2},
  pages = {38--47},
  year = {1996}
}

@article{olah2020zoom,
  author = {Olah, Chris and Cammarata, Nick and Schubert, Ludwig and Goh, Gabriel and Petrov, Michael and Carter, Shan},
  title = {Zoom In: An Introduction to Circuits},
  journal = {Distill},
  year = {2020}
}

@misc{elhage2022toy,
  author = {Elhage, Nelson and Hume, Tristan and Olsson, Catherine and Schiefer, Nicholas and Henighan, Tom and Kravec, Shauna and Hatfield-Dodds, Zac and Lasenby, Robert and Drain, Dawn and Chen, Carol and others},
  title = {Toy Models of Superposition},
  year = {2022},
  howpublished = {Transformer Circuits Thread}
}

@inproceedings{jimenez2024swebench,
  author = {Jimenez, Carlos E. and Yang, John and Wettig, Alexander and Yao, Shunyu and Pei, Kexin and Press, Ofir and Narasimhan, Karthik},
  title = {SWE-bench: Can Language Models Resolve Real-World GitHub Issues?},
  booktitle = {International Conference on Learning Representations},
  year = {2024}
}

@article{yao2024taubench,
  author = {Yao, Shunyu and Shinn, Noah and Razavi, Pedram and Narasimhan, Karthik},
  title = {{$\tau$-bench}: A Benchmark for Tool-Agent-User Interaction in Real-World Domains},
  journal = {arXiv preprint arXiv:2406.12045},
  year = {2024}
}

@article{mialon2023gaia,
  author = {Mialon, Gr{\'e}goire and Fourrier, Cl{\'e}mentine and Swift, Craig and Wolf, Thomas and LeCun, Yann and Scialom, Thomas},
  title = {GAIA: A Benchmark for General AI Assistants},
  journal = {arXiv preprint arXiv:2311.12983},
  year = {2023}
}

@article{liu2023agentbench,
  author = {Liu, Xiao and Yu, Hao and Zhang, Hanchen and Xu, Yifan and Lei, Xuanyu and Lai, Hanyu and Gu, Yu and Ding, Hangliang and Men, Kaiwen and Yang, Kejuan and others},
  title = {AgentBench: Evaluating LLMs as Agents},
  journal = {arXiv preprint arXiv:2308.03688},
  year = {2023}
}

@article{liang2023helm,
  author = {Liang, Percy and Bommasani, Rishi and Lee, Tony and Tsipras, Dimitris and Soylu, Dilara and Yasunaga, Michihiro and Zhang, Yian and Narayanan, Deepak and Wu, Yuhuai and Kumar, Ananya and others},
  title = {Holistic Evaluation of Language Models},
  journal = {Transactions on Machine Learning Research},
  year = {2023}
}

@article{brundage2020toward,
  author = {Brundage, Miles and Avin, Shahar and Wang, Jack and Belfield, Haydn and Krueger, Gretchen and Hadfield, Gillian and others},
  title = {Toward Trustworthy AI Development: Mechanisms for Supporting Verifiable Claims},
  journal = {arXiv preprint arXiv:2004.07213},
  year = {2020}
}

@inproceedings{necula1997proof,
  author = {Necula, George C.},
  title = {Proof-Carrying Code},
  booktitle = {ACM SIGPLAN Symposium on Principles of Programming Languages},
  pages = {106--119},
  year = {1997}
}

@misc{rademacher2026standingalgebra,
  author = {Rademacher, Jonathan},
  title = {Standing Algebra {$\Sigma^R$}: A Closure-Theoretic Operator for Constraining Domination and Preserving Autonomy},
  howpublished = {Zenodo working paper, Version 6.5},
  year = {2026},
  month = apr,
  doi = {10.5281/zenodo.19656146},
  url = {https://doi.org/10.5281/zenodo.19656146}
}

\end{document}